\documentclass[accepted]{uai2023arxiv} 

\usepackage[american]{babel}

\usepackage{natbib} 
    \bibliographystyle{plainnat}

\usepackage{todonotes}
    \setuptodonotes{inline}
\usepackage{amssymb}
\usepackage{amsthm}
\usepackage{xcolor}
\usepackage{mathtools} 
\usepackage{booktabs} 
\usepackage{tikz} 
    \usetikzlibrary{bayesnet}
    \tikzstyle{obs} = [latent,fill=gray!40]

\usepackage{subcaption}



\newtheorem{theorem}{Theorem}[section]

\renewcommand{\paragraph}[1]{\textbf{#1}~}

\newcommand{\OurTitle}{Bayesian Quantification with Black-Box Estimators}
\title{\OurTitle{}}

\newcommand{\AppendixFastInference}{A}
\newcommand{\AppendixAsymptoticIdentifiability}{B}
\newcommand{\AppendixExperiments}{C}
\newcommand{\AppendixDiscretization}{D}
\newcommand{\AppendixQuantificationMethods}{E}

%
%
\author[1]{Albert Ziegler}
\author[2]{Pawe{\l} Czy{\.z}}
\affil[1]{%
    GitHub Next\\
    GitHub, Inc.\\
    San Francisco\\
    USA
}
\affil[2]{%
    ETH AI Center and Department of Biosystems Science and Engineering\\
    ETH Z{\"u}rich\\
    Switzerland
}

\begin{document}
\maketitle

\begin{abstract}
  Understanding how different classes are distributed in an unlabeled data set is an important challenge for the calibration of probabilistic classifiers and uncertainty quantification.
  Approaches like adjusted classify and count, black-box shift estimators, and invariant ratio estimators use an auxiliary (and potentially biased) black-box classifier trained on a different (shifted) data set to estimate the class distribution and yield asymptotic guarantees under weak assumptions.
  We demonstrate that all these algorithms are closely related to the inference in a particular Bayesian model, approximating the assumed ground-truth generative process.
  Then, we discuss an efficient Markov Chain Monte Carlo sampling scheme for the introduced model and show an asymptotic consistency guarantee in the large-data limit.
  We compare the introduced model against the established point estimators in a variety of scenarios, and show it is competitive, and in some cases superior, with the state of the art.
\end{abstract}

\section{Introduction}
Consider a medical test predicting illness (classification label $Y$), such as influenza, based on symptoms (features $X$), such as high fever.
This often can be modelled as an anti-causal problem\footnote{While influenza causes high fever, in many medical problems the causal relationships are much more complex \citep{castro2020}.} \citep{Scholkopf2012}, where $Y$ causally affects $X$. 
Under the usual i.i.d assumption, one can approximate the probabilities $P(Y\mid X)$ using the training data set.

However, the performance on real-world data may be lower than expected, due to data shift: the issue that real-world data comes from a different probability distribution than training data. 
For example, classifiers trained during early stages of the COVID-19 pandemic will underestimate the incidence of the illness at the time of surge in infections. 

The paradigmatic case of data shift is \textit{prior probability shift}, where the context 
influences the distribution of the target label $Y$, although the generative
mechanism generating $X$ from $Y$ is left unchanged. In  other words,
\newcommand{\Ptest}{P_\text{test}}
\newcommand{\Ptrain}{P_\text{train}}
\(
    \Ptrain(X\mid Y) = \Ptest(X\mid Y),
\)
although $\Ptrain(Y)$ may differ from $\Ptest(Y)$.
If $\Ptest(Y)$ is known, then $\Ptest(Y\mid X)$ can be calculated by rescaling $\Ptrain(Y\mid X)$ according to Bayes' theorem (see \citet[\textsection 2.2]{Saerens-2001-adjustingtheoutputs} or \citet[\textsection 3.2]{Scholkopf2012}), 
and is conceptually similar to importance weighting in training classifiers on unbalanced data set \citep[\textsection 3.2]{Kouw2019}.

However, $\Ptest(Y)$ is usually unknown and needs to be estimated having access only to a finite sample from covariates distribution $\Ptest(X)$. This task is known as quantification \citep{gonzalez-review-quantification, forman}. 
 
Although quantification found applications in adjusting the classifier predictions, it is an important problem on its own. For example, imagine an inaccurate but cheap COVID-19 test,
which can be taken by a significant fraction of the population on a weekly basis. While this test may not be sufficient to determine whether a particular person has COVID-19, the estimate of the true number of positive cases could be used by epidemiologists to monitor the reproduction number and by the health authorities to inform public policy\footnote{Note that outbreaks induce correlations between observed data, violating the usual assumption that the data are exchangeable. We discuss contraindications in Subsection \ref{subsection:societal-impact}.}.

We advocate treating the quantification problem using Bayesian modelling, which allows estimating the uncertainty attached to the $\Ptest(Y)$ estimate.
This uncertainty can be used directly if the distribution on the whole population is of interest, or it can be used to calibrate a probabilistic classifier to yield a more informed estimate for the label of a particular observation.

A Bayesian approach was already proposed by \citet[\textsection 6]{Storkey2009}.
However, that proposal relies on a generative model $P(X\mid Y)$, which is often intractable in high-dimensional settings. 
Hence, quantification is usually approached either via the expectation maximization (EM) algorithm \citep{peters-coberly, Saerens-2001-adjustingtheoutputs} or a family of closely-related algorithms known as invariant ratio estimators \citep{Vaz-Izbicki-Stern}, black-box shift estimators \citep{Lipton2018}, or adjusted classify and count \citep{forman}, which replace the generative model $P(X\mid Y)$ with a (potentially biased) classifier.
\citet{Tasche2017}, \citet{Lipton2018}, and \citet{Vaz-Izbicki-Stern} proved that these algorithms are asymptotically consistent (they rediscover $\Ptest(Y)$ in the limit of infinite data) under weak assumptions and derived asymptotic bounds on the related error.

Our contributions are:
\begin{enumerate}
    \item For the first time, we show how to interpret this family of algorithms as an approximation of the (usually intractable) inference in the Bayesian setting.
    \item We present a tractable approach well suited for low data situations. Established alternatives provide asymptotic estimates on error bounds, but may be far off for small samples (to the point that some of the estimates for $\Ptest(Y)$ may be negative).
    Our approach explicitly quantifies the uncertainty and does not suffer from the negative values problem.
    Moreover, it is possible to incorporate expert's knowledge via the choice of the prior distribution.
    \item We prove that the \emph{maximum a posteriori} inference in our model is asymptotically consistent under weak assumptions.
\end{enumerate}

\section{Bayesian Quantification}

Consider an object with label $Y$, represented by a random variable (r.v.) valued in $\mathcal Y = \{1, 2, \dotsc, L\}$, and features $X$ (r.v.~with values in some set $\mathcal X$).
We consider an anti-causal problem in which there exists a (non-deterministic) mechanism $P_{\theta^*}(X\mid Y)$, responsible for generating the features from the label.

\newcommand{\Plab}{P_\text{lab}}
\newcommand{\Punl}{P_\text{unl}}
\newcommand{\Mtrue}{\mathcal M_\text{true}}
\newcommand{\Mapprox}{\mathcal M_\text{approx}}
\newcommand{\Msmall}{\mathcal M_\text{small}}

We consider two populations (``labeled'' and ``unlabeled'') sharing the same causal mechanism $\Plab(X\mid Y) = \Punl(X\mid Y) = P_{\theta^*}(X\mid Y)$, but which can differ in the prevalence of class labels, i.e., $\Plab(Y) \neq \Punl(Y)$.
We usually have only a finite sample from each of these, so we will use a probabilistic graphical model $\Mtrue$ (see Fig.~\ref{fig:bayes-networks}).

\begin{figure}[t]
    \begin{minipage}[t]{0.4\linewidth}
        \centering
        \begin{tikzpicture}[scale=0.85,every node/.style={transform shape}]
        \node[latent] (theta) {$\theta$};

        \node[text width=1cm] at (-1.65, -1.8) {\Large $\Mtrue$};

        \node[obs, above=of theta, yshift=-0.35cm] (yi) {$Y_i$};
        \node[obs, right=of yi] (xi) {$X_i$};
        \node[latent, left=of yi] (pi) {$\pi$};

        \node[latent, below=of theta, yshift=0.75cm] (yi1) {$Y_j'$};
        \node[latent, left=of yi1] (pi1) {$\pi'$};
        \node[obs, right=of yi1] (xi1) {$X_j'$};

        \edge {pi} {yi};
        \edge {yi} {xi};
 
        \edge {pi1} {yi1};
        \edge {yi1} {xi1};
        \edge {theta} {xi,xi1};
        \plate {train} {(yi)(xi)} {$N$};
        \plate {test} {(yi1)(xi1)} {$N'$};
        \end{tikzpicture}
    \end{minipage}%
    \begin{minipage}[t]{0.6\linewidth}
    \centering
        \begin{tikzpicture}[scale=0.85,every node/.style={transform shape}]
        \node[latent] (phi) {$\varphi$};

        \node[text width=1cm] at (-1.7, -1.8) {\Large $\Mapprox$};

        \node[obs, above=of phi, yshift=-0.3cm] (yi) {$Y_i$};
        \node[obs, right=of yi] (ci) {$C_i$};
        \node[latent, left=of yi] (pi) {$\pi$};

        \node[latent, below=of phi, yshift=0.7cm] (yi1) {$Y_j'$};
        \node[latent, left=of yi1] (pi1) {$\pi'$};
        \node[obs, right=of yi1] (ci1) {$C_j'$};

        \edge {pi} {yi};
        \edge {yi} {ci};
 
        \edge {pi1} {yi1};
        \edge {yi1} {ci1};
 
        \edge {phi} {ci,ci1};

        \plate {train} {(yi)(ci)} {$N$};
        \plate {test} {(yi1)(ci1)} {$N'$};
        \end{tikzpicture}
    \end{minipage}%
    \caption{Left: High-dimensional model $\Mtrue$. Right: tractable approximation $\Mapprox$. Filled nodes represent observed r.v., top row represents the labeled data set and the bottom row represents the unlabeled data set.}
    \label{fig:bayes-networks}
\end{figure}
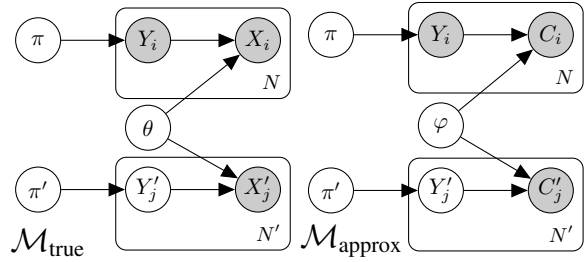

Parameters of the true generative mechanism $\theta^*$ are not known, so  we model them with a latent r.v.~$\theta$.
Then, prevalence of $Y$ in both populations is modelled by r.v.~$\pi$ for $\Plab(Y)$ and $\pi'$ for $\Punl(Y)$, valued in the probability simplex\footnote{Note that we use the open simplex. In particular, we assume that each label $l\in \mathcal Y$ has a non-zero probability of occurring under both $\Plab$ and $\Punl$.}
\(
    \Delta^{L-1} = \left\{ y \in {(0, 1)}^L \colon y_1 + \cdots + y_L = 1 \right\}.
\)

The labels are sampled from the categorical distributions:
\begin{align*}
    Y_i \mid \pi &\sim \mathrm{Categorical}(L, \pi), & i= 1,\dotsc, N,\\
    Y_j' \mid \pi' &\sim \mathrm{Categorical}(L, \pi'), & j = 1\dotsc, N',
\end{align*}
and the mechanism generating features $X$ from label $Y$ is assumed to be unchanged, i.e., 
\begin{align*}
    X_i\mid y_i, \theta  &\sim P_{\theta}(X\mid Y=y_i), & i = 1,\dotsc, N,\\
    X_j'\mid y_j', \theta  &\sim P_{\theta}(X\mid Y=y'_j), & j = 1,\dotsc, N'.
\end{align*}
where $P_{\theta}(X\mid Y)$ is the generative mechanism, which can be arbitrarily complex distribution.
Note that this is often called the prior probability shift assumption, i.e., for every $l\in \mathcal Y$, $\Plab(X\mid Y = l) = \Punl(X\mid Y=l)$ \citep{Lipton2018, Vaz-Izbicki-Stern, Tasche2017}.

In the Bayesian setting, solving the quantification problem amounts\footnote{For probabilistic classifier recalibration, it seems promising to reparametrize this model to infer the entry-wise quotient $Q=\pi'/\pi$, rather than propagating the uncertainty from $P(\pi, \pi')$. We will, however, not pursue this direction in this work.}
to inferring the posterior \(P(\pi, \pi' \mid \{X_i, Y_i\}, \{X_j'\}).\)
In many cases\footnote{  
The posterior inference on $\pi$ can be done analytically, if one assumes that the prior $P(\pi, \pi', \theta)$ factorizes and $P(\pi)$ is modelled with a Dirichlet prior. 
} the training data set may be sufficiently large to replace $\pi$ by a (maximum likelihood) point estimate and it would suffice to infer $P(\pi' \mid \{X_i, Y_i\}, \{X_j'\})$. 
Unfortunately, inference of $\pi'$ in this model is intractable whenever $X$ is high-dimensional and the generative mechanism $P_\theta(X\mid Y)$ is complex, as one needs to marginalize $\theta$ out.

\subsection{Approximating the Model}
\label{subsection:approximating-the-model}

One possible solution to circumvent the problem is to replace high-dimensional features with a simpler representation, which may be easier to model.
Consider an auxiliary space $\mathcal C$ and a mapping\footnote{Although we use notation associated with set-theoretic functions, our results hold mutatis mutandi even if the mapping is not deterministic, i.e., $f$ can be some sampling procedure from $P(C\mid X)$.} $f\colon \mathcal X\to \mathcal C$ defining new r.v.~$C_i = f(X_i)$ and $C_j' = f(X_j')$. 






 
  



If the space $\mathcal C$ is low-dimensional and $f$ is informative enough, it may be possible to model $P_\varphi(C\mid Y)$ (instead of $P_\theta(X\mid Y)$), while retaining enough information about $Y$.
In other words, one may try to do inference using an approximation $\Mapprox$ to the original model (see Fig.~\ref{fig:bayes-networks}).
For inference, this corresponds to replacing the intractable posterior $P(\pi, \pi' \mid \{X_i, Y_i\}, \{X_j'\})$ with a more tractable $P(\pi, \pi' \mid \{C_i, Y_i\}, \{C_j'\})$, supposed to be a realistic approximation to the original problem\footnote{Substituting $X_i$ for some summary statistic $C_i=f(X_i)$ loses some information and makes the learning problem harder.
However, it is effectively done in every statistical problem, in the form of feature selection or observing selected data modalities. 
}.

In terminology of \citet{Lipton2018}, this is written as $\Plab(C\mid Y=l) = \Punl(C\mid Y=l)$ and is called weak prior probability shift assumption. 
It is slightly more general than the original prior probability shift assumption, as invariance of $P(X\mid Y)$ implies invariance of $P(C\mid Y)$.
On the other hand, even if $P(X\mid Y)$ is not invariant (e.g., the image background changes), it may still be possible to learn invariant representations \citep{invariant-risk-minimization}.

By changing $\mathcal C$ and $f$ one can control the trade-off between the tractability and approximation quality: using $\mathcal C = \mathcal X$ and $f(x) = x$ gives the original problem, which is intractable but there is no approximation error; on the other hand, the trivial approximation $\mathcal C = \{1\}$ and $f(x) = 1$ forgets any available information and results in the posterior being the same as the prior $P(\pi' \mid \{C_i, Y_i\}, \{C_j'\}) = P(\pi')$ even in the limit of infinite data. 
We focus on one particular model, related to the algorithms known as quantification with black-box estimators and adjusted classify and count.

\subsection{The Discrete Model}
\label{subsection:the-discrete-model}

Consider $\mathcal C = \{1, 2, \dotsc, K\}$ and any given $f\colon \mathcal X\to \mathcal C$. The mechanism $P_{\varphi}(C\mid Y)$ is represented by a matrix
\(
    \varphi_{lk} :=  P(C=k \mid Y=l).
\)
Using the notation \( \varphi_{l:} := (\varphi_{lk})_{k\in \mathcal C}\) we can write the approximate model $\Mapprox$ as:
\begin{align}
    \pi, \pi', \varphi &\sim \text{Prior knowledge}\\
    Y_i \mid \pi &\sim \text{Categorical}(L, \pi), &i= 1, \dotsc, N \label{eq:generate_y_valid} \\
    C_i\mid y_i, \varphi &\sim \text{Categorical}(K, \varphi_{y_i:}), &i = 1, \dotsc, N  \label{eq:generate_c_valid} \\
    Y_j' \mid \pi' &\sim \text{Categorical}(L, \pi'), & j = 1, \dotsc, N' \label{eq:generate_y_test} \\
    C_j'\mid y_j', \varphi &\sim \text{Categorical}(K, \varphi_{y_j':}). & j = 1, \dotsc, N' \label{eq:generate_c_test}
\end{align}
It is convenient to model the prior on $\pi$, $\pi'$, and vectors $\varphi_{l:}$ using Dirichlet distributions, which conceptually resembles Latent Dirichlet Allocation \citep{Pritchard2000, Blei2003}, especially if several different test populations were used.
However, there are two important differences.
First, r.v.~$Y_i$ and $C_i$ are observed, which constrains the $\varphi$ matrix.
Secondly, the range $\mathcal C$ is discrete and small, rather than a list of integers. In Section \ref{subsection:fast-inference} we will show how to construct a scalable sufficient statistic and perform efficient inference using Hamiltonian Markov Chain Monte Carlo methods \citep{Betancourt2017-HMC}.

We should stress that the function $f$ does not need to retain any information (e.g., for $K=1$) and the (tractable) $P(\pi' \mid \{C_i, Y_i\}, \{C_j'\})$ may be very different from the (generally intractable) $P(\pi' \mid \{X_i, Y_i\}, \{X_j'\})$.

\subsection{Fast Inference}
\label{subsection:fast-inference}

In this section we construct a sufficient statistic for $\pi$, $\pi'$, and $\varphi$, whose size independent is of $N$ and $N'$.

Define a $K$-tuple $(N'_k)_{k\in \mathcal C}$ of r.v.~summarizing the unlabeled data set
\(
    N'_k(\omega) = \big| \{ j \in \{1, \dotsc, N'\} : C'_j(\omega) = k \} \big|,
\)
where $\omega$ is a random outcome. This can be constructed in $O(K)$ memory and $O(N')$ time: when we observe $x_1', \dotsc, x'_{N'}$ (a realization of $X_1', \dotsc, X'_{N'}$), we apply the mapping $f$ to obtain the realization $c'_1, \dotsc, c'_{N'}$ and count indices $j \in\{1, \dotsc, N'\}$ such that $c'_j = k$.
Then, for each $l \in \mathcal Y$ we define a $K$-tuple of r.v.~$(F_{lk})_{k\in \mathcal C}$, such that 
\(
    F_{lk}(\omega) = \big|\{ i \in \{1, \dotsc, N\} : Y_i(\omega) = l \text{ and } C_i(\omega) = k  \}\big|.
\)
When the labeled data set $\{(x_1, y_1), \dotsc, (x_N, y_N)\}$ is observed, we apply the function $f$ to obtain $c_i = f(x_i)$ and count different $i$ such that $(c_i, y_i) = (k, l)$.
This requires $O(LK)$ memory and $O(N)$ time.
Finally, we define an $L$-tuple of r.v.~$(N_l)_{l\in \mathcal Y}$ by
\(
    N_l = F_{l1} + \dots + F_{lK}.
\)

In Appendix \AppendixFastInference{} we prove that the likelihood
\(
    P( \{Y_i, C_i\}, \{Y'_j\} \mid \pi, \pi', \varphi)
\)
is proportional\footnote{With a proportionality constant that is a positive function of
$(N_l)_{l\in Y}, (N'_k)_{k\in C}, (F_{kl})_{k\in \mathcal C, l\in \mathcal Y}$.} to the likelihood
\(
    P\big((N_l)_{l\in Y}, (N'_k)_{k\in C}, (F_{kl})_{k\in \mathcal C, l\in \mathcal Y} \mid \pi, \pi', \varphi \big),
\)
in the smaller model $\Msmall$:
\begin{align}
    (N_l)_{l\in \mathcal Y} \mid \pi  &\sim \mathrm{Multinomial}(N, \pi),\\
    (F_{lk})_{k\in \mathcal C} \mid n_l, \varphi  &\sim \mathrm{Multinomial}(n_l, \varphi_{l:}),~l \in \mathcal Y,\\
    (N'_k)_{k\in \mathcal C} \mid \pi', \varphi  &\sim \mathrm{Multinomial}(N',  \varphi^T \pi').
\end{align}
Hence, by the factorization theorem \citep{Halmos-Savage-1949}, we constructed a sufficient statistic for the inference of $\pi$, $\pi'$, $\varphi$, which size is independent on $N$ and $N'$.
In turn, we can use the likelihood\footnote{%
As its gradient is easily computable, we can use any of the efficient Hamiltonian Markov Chain Monte Carlo algorithms \citep{Betancourt2017-HMC,Hoffman-NUTS-sampler}.
}
of $\Msmall$ to sample $\pi$, $\pi'$ and $\varphi$ from the posterior of $\pi'$ rather than from $\Mapprox$.


\subsection{Asymptotic Guarantees}

As we discussed in Subsection \ref{subsection:approximating-the-model}, the principled posterior $P(\pi, \pi' \mid \{X_i, Y_i\}, \{X'_j\})$ in $\Mtrue$ will in general be different from the posterior $P(\pi, \pi' \mid \{C_i, Y_i\}, \{C'_j\})$ in approximated model $\Mapprox$.
In particular, for $\mathcal C = \{1\}$, our posterior will be the same as the prior and no learning will occur even in the infinite data limit.
Therefore, it is natural to ask under which conditions the posterior will shrink around the true value of $\Punl(Y)$.

Our model-based approach, similarly to black-box shift Estimators \citep{Lipton2018}, invariant ratio estimators \citep{Vaz-Izbicki-Stern}, and adjusted classify and count \cite{forman} 
relies on the law of total probability:
\[
    \Punl(C=k) = \sum_{l=1}^L \Punl(C=k\mid Y=l) \Punl(Y=l).
\]
If the matrix $\Punl(C\mid Y)$ is of full rank $L$, then it is left-invertible and $\Punl(Y)$ can be obtained from $\Punl(C)$ and $\Punl(C\mid Y)$.
The former can be estimated by applying the classifier $f$ to unlabeled data and the latter by using the (weak) prior probability shift assumption $\Punl(C\mid Y)=\Plab(C\mid Y)$ allowing us to estimate it from the labeled data set.
The existing techniques use slightly different point estimators to estimate the required probabilities.
In particular, under conditions essentially equivalent to the full-rank requirement they are asymptotically identifiable \citep{Lipton2018,Vaz-Izbicki-Stern,Tasche2017}, recovering the probabilities $\Punl(Y)$ in large data limit.

In our approach we do not invert the matrix $\Punl(C\mid Y)$ (modelled with $\varphi^T$), as any degeneracy is simply reflected in the posterior (showing that we did not learn anything new about the prevalence of some classes).
However, if the full-rank condition holds, the \emph{maximum a posteriori} estimate asymptotically recovers the true parameters.\!\footnote{This is similar to the classical Bernstein--von Mises theorem linking Bayesian and frequentist inference in the large data limit.} 
In Appendix \AppendixAsymptoticIdentifiability{} we prove the following result:
\begin{theorem}
    Assume the model is not misspecified, the true $\pi^*$, $\pi'^*$, and all $\varphi_{l:}^*$ parameters lie inside the open simplices
    , the prior $P(\pi, \pi', \varphi)$ is continuous and strictly positive on the whole space, and the ground-truth $P(C\mid Y) = (\varphi^*)^T$ matrix is of full rank $L$.
    
    Then, for every $\delta > 0$ and $\varepsilon > 0$, there exist $N$ and $N'$ large enough that with probability at least $1-\delta$ the \emph{maximum a posteriori} estimate $\hat \pi, \hat \pi', \hat \varphi$ is in the $\varepsilon$-neighborhood of the true parameter values $\pi^*, \pi'^*, \varphi^*$.   
\end{theorem}


\section{Experimental Results}

\subsection{Categorical Model}
We evaluate the estimation of a prevalence vector $\pi'$ given only the black-box mapping $f\colon \mathcal X \to \mathcal C$.
Although Bayesian models provide uncertainty quantification such as credible intervals, we restrict reporting to the \emph{maximum a posteriori} estimate for fair comparison with other methods, which only provide point estimates.

\paragraph{Experimental Design} We fix the data set sizes $N$ and $N'$, the ground-truth prevalence vectors $\pi^*$ and $\pi'^*$. We construct the ground-truth matrix $P(C\mid Y)$ by choosing the ``quality'' parameter $q$ (corresponding to the true positive rate for each class in the case $L=K$) and distributing the prediction errors uniformly among other classes\footnote{
In case $K < L$ Eq. \ref{eq:p_c_y_row} is not a valid probability vector. For $l \in \{L+1, L+2, \dotsc, K\}$ we used $\varphi_{lk} = 1/K$.
}:
\begin{equation}\label{eq:p_c_y_row}
    \varphi^*_{l:}(q) = \left( \frac{1-q}{K-1}, \dotsc, \underbrace{q}_{l\text{th position}}, \dotsc, \frac{1-q}{K-1}\right).
\end{equation}
We parametrize $\pi'^*$ as
\(
    \pi'^*(r) =\left(r, \frac{1 - r}{L-1}, \dotsc, \frac{1 - r}{L-1} \right)
\)
and fix $\pi^* = (1/L, \dotsc, 1/L)$.
%
%
Then, we sample
\(
 \mathcal D = \{ (y_1, c_1), \dotsc, (y_N, c_N) \}
\)
(according to Eq. \ref{eq:generate_y_valid}--\ref{eq:generate_c_valid}) and the corresponding unlabeled data set
\(
 \mathcal D' = \{ c'_1, \dotsc, c'_{N'} \}
\)
(using Eq. \ref{eq:generate_y_test}--\ref{eq:generate_c_test})
100 times and for each sample we compare the $\ell_\infty$ (maximum discrepancy) error%
\footnote{See Appendix~\AppendixExperiments{} for a comparison using different metrics.} between the point estimate $\hat\pi'$ and $\pi'^*$.

Unless explicitly stated otherwise, experiments in this section use the default values from Table \ref{table:parameters-categorical}.

\begin{table}[h]
\caption{Default parameters used in the experiments.} \label{table:parameters-categorical}
\begin{center}
\begin{tabular}{ll}
$N$  & 1000 \\
$N'$ & 500 \\
$r$ & 0.7\\
$q$ & 0.85\\
$L$ & 5\\
$K$ & 5\\
\end{tabular}
\end{center}
\end{table}

We consider the following point estimators\footnote{In Appendix \AppendixQuantificationMethods{} we review existing quantification methods.} capable of consuming ``hard'' labels: the
black-box shift estimator (\citet{Lipton2018}, BBSE), the
invariant ratio estimator (\citet{Vaz-Izbicki-Stern}, IR),
a simple baseline ``classify and count'' approach (CC),
and point \emph{maximum a posteriori} estimates from the $\Mapprox$ model with Dirichlet prior with $\alpha=(1, \dotsc, 1)$ (flat prior; MAP-1) and $\alpha=(2, \dotsc, 2)$ (weakly informative prior; MAP-2).

\paragraph{Changing prevalence}
We investigate the impact of increasing the prior probability shift (the difference between $\pi$ and $\pi'$) by changing $r = \pi'_1 \in \{0.5, 0.6, 0.7, 0.8, 0.9\}$ and summarize the results in the first panel of Fig. 
\ref{fig:categorical-all-experiments}.
CC is adversely impacted by a strong data shift. The other estimators all perform similar to each other.

\begin{figure*}[t]
    \centering
    \includegraphics[width=0.9\linewidth]{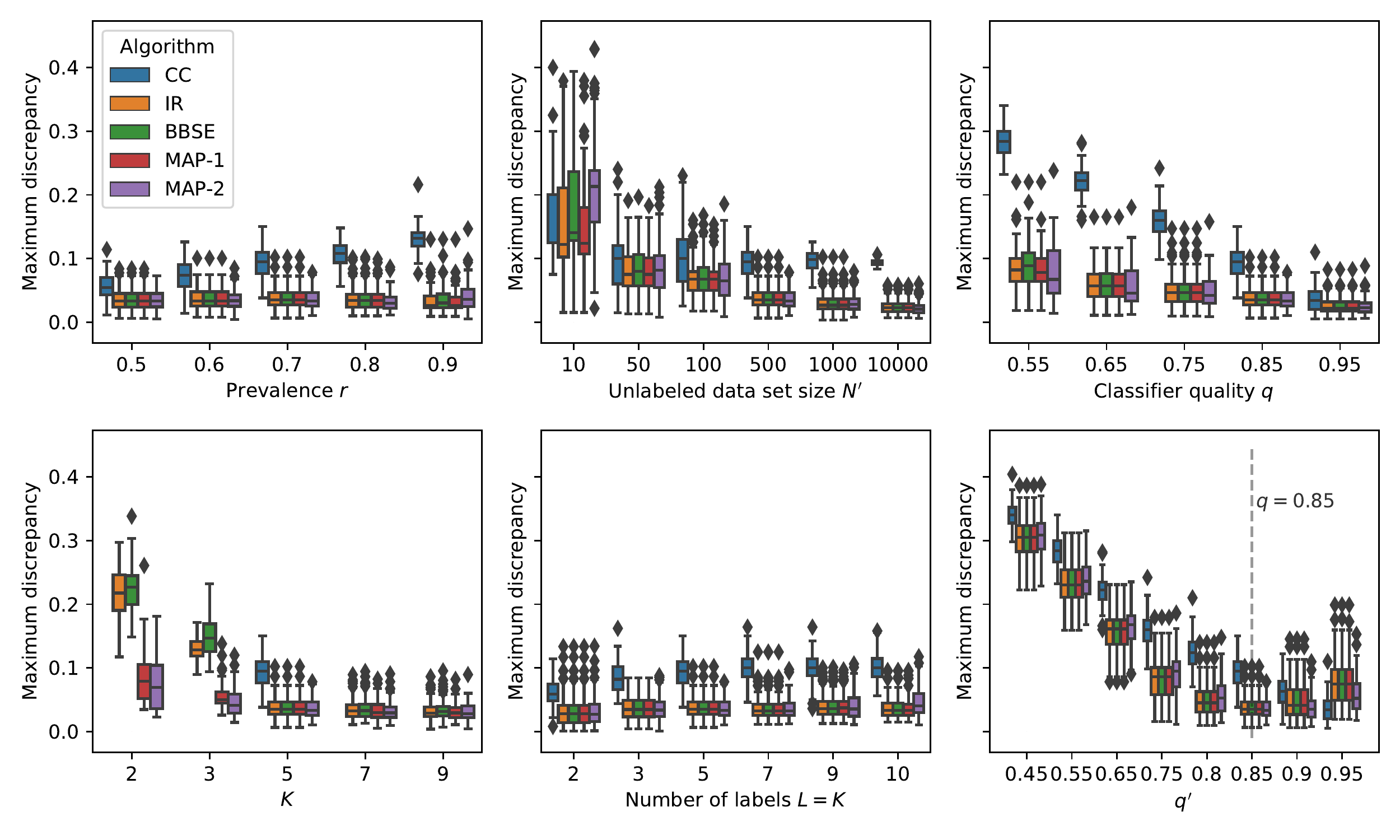}
    \caption{Quantification using simulated categorical black-box classifiers under different scenarios.}
    \label{fig:categorical-all-experiments}
\end{figure*}


\paragraph{Changing data set size}
We investigate whether the algorithms converge to the ground-truth value in the large data limit. We vary $N' \in \{ 10, 50, 100, 500, 10^3, 10^4 \}$. As shown in the second panel of Fig. \ref{fig:categorical-all-experiments}, the large data limit appears very similar (except for CC), agreeing with asymptotic identifiability guarantees for BBSE, IR and our MAP estimates. 


\paragraph{Changing classifier quality}
We investigate the impact of classifier quality (i.e. the predictive accuracy of each class) by changing it to $q \in \{0.55, 0.65, 0.75, 0.85, 0.95\}$ and show the results in the third panel of Fig. \ref{fig:categorical-all-experiments}. All considered method converge to zero error for high quality, but the convergence of CC is much slower than for the other algorithms.

\paragraph{Changing the classification granularity}
We change $K\in \{2, 3, 5, 7, 9\}$, creating a setting when a given classifier, trained on a different data distribution, is still informative about some of the classes, but provides different information. In particular, the CC estimator cannot be used for $K\neq L$. Although the original formulation of BBSE and IR assumes $K=L$, we proceed with the left inverse.
Our choice of $\varphi^*$ given above guarantees that the classifier for $K > L = 5$ will contain at least as much information as a classifier with a smaller number of classes. Conversely for $K < L$, the information about some of the classes will be insufficient even in the large data regime --- it is not possible for the matrix $P(C\mid Y)$ to have rank $L$, and asymptotic consistency does not generally hold.

The results are shown in the first panel of the second row in Fig. \ref{fig:categorical-all-experiments}. While all methods considered (apart from CC) suffer little error for $K \le L$, we note that our model-based approach can still learn something about the classes for which the classifier is informative enough, while the techniques based on matrix inversion are less effective.
Additionally, we should stress that the Bayesian approach gives the whole posterior distribution on $\pi'$ (which will not shrink), although in the plot we only compare the MAP estimates.


\paragraph{Changing the number of classes}
Finally, we jointly change $L=K \in \{2, 3, 5, 7, 9, 11, 20\}$. 
We plot the results in the fifth panel of Fig. \ref{fig:categorical-all-experiments}.
Again, classify and count obtains markedly worse results, with smaller differences between the other methods.

\paragraph{Model misspecification}
Finally, we study robustness of the considered approaches in the $\mathcal M$-open setting, i.e., when the assumption of model $\Mapprox$ (and in particular $\Mtrue$) does not hold --- the unlabeled samples are sampled according to a different $P(C\mid Y)$ distribution.
Although in this case asymptotic identifiability guarantees do not hold, we believe this to be an important case which may occur in practice (when additional distributional shifts are present).
We introduce a second matrix $\varphi'^*(q')$ and sample predictions $C'$ accordingly.
For $q' = q = 0.85$ we have $\varphi^*=\varphi'^*$, so that only prior probability shift is present.
We see that the performance of BBSE, IR and MAP estimates deteriorates for large discrepancies between $q$ and $q'$.
However, for $|q-q'| \le 0.05$, the median error of BBSE, IR and MAP is still arguably tame (although the estimator variance increases), so we hope that these methods can be employed even if the prior probability shift assumption is only approximately correct.
Note that in the case when $q' > q$, (i.e., the classifier has better predictive accuracy on the unlabeled data set than on the labeled data set, which we think rarely occurs in practice), CC outperforms other methods.

\subsection{Nearly Non-Identifiable Model}

The above experiments compare the point estimates. However, Bayesian methods shine especially at uncertainty quantification. In this section we consider a case with $L=K=3$ and
\[
    \varphi^* = (\varphi^*_{lk}) = \begin{pmatrix}
        0.96 & 0.02 & 0.02\\
        0.02 & 0.50 & 0.48\\
        0.02 & 0.48 & 0.50
    \end{pmatrix}.
\]
As the matrix is full rank, asymptotic identifiability results hold.
However, in practice classes 2 and 3 are hard to distinguish basing on the outputs of the classifier. 
We used $\pi^*=(1/3, 1/3, 1/3)$ and $\pi'^* = (0.6, 0.3, 0.1)$ and sampled data sets $\mathcal D$ and $\mathcal D'$ from the model with $N=N'=500$.

Although the point estimates can be of different quality (with BBSE returning negative probabilities), the uncertainty in our model seems to be well-calibrated, shrinking around $\pi'_1$.
The marginal variance around $\pi'_2$ and $\pi'_3$ is larger, but the joint posterior plot reveals that the model is nearly non-identifiable and only the sum of $\pi'_2 + \pi'_3 = 1-\pi'_1$ can be well-recovered in this case.

\subsection{Prevalence Estimation in Practice}

In this section we apply quantification methods to two biomedical data sets.

The classical example data set Diagnostic Wisconsin Breast Cancer Database \citep{UCI-datasets} consists of 212 malignant ($Y=1$) and 357 benign ($Y=2$) samples. We train a simple black-box classifier (random forest, \citet{scikit-learn}) for the purpose of testing the methods described in the previous section. This is a ``soft'' classifier $f\colon \mathcal X \to \Delta^{L-1}$, so we can apply the EM algorithm of \citet{Saerens-2001-adjustingtheoutputs, peters-coberly} and the soft version of the Invariant Ratio Estimator \cite{Vaz-Izbicki-Stern}, generalizing the approach of \cite{Bella2010}. For other methods, we use discretized versions.\!\footnote{See Appendix~\AppendixDiscretization{} for an overview of discretization approaches.}

Using $\pi^* = (0.5, 0.5)$ and $\pi'^* = (0.3, 0.7)$, we split the original data set into three disjoint data sets: training data (size 200 with exact proportions $\pi$) used to train a classifier $f\colon \mathcal X \to \Delta^1$; labeled data (size $N=100$ with exact proportions $\pi$) to which we apply the classifier $f$ to get the predictions and data set $\mathcal D = \{(y_1, c_1), \dotsc, (y_N, c_N)\}$; and unlabeled data (size $N'=150$ with exact proportions $\pi'$).
We only use the predictions of $f$ on the covariates to obtain $\mathcal D' = \{c'_1, \dotsc, c'_{N'}\}$.

Fig. \ref{fig:cancer_experiment} displays both the posterior learned by our model (with a flat prior used for all latent r.v.) as well as the other methods' point estimates. Arguably all are somewhat reasonable. But there is also considerable variation between the point estimates: in any practical application (say large scale drug procurement planning), it makes a huge difference whether one believes EM's predicted prevalence of 28\% or CC's predicted prevalence of 34\%. In our opinion, this stresses the importance of an approach that allows one to \emph{quantify the uncertainty} --- the procurement planner should believe neither point estimate, but choose a credible interval with a certainty corresponding to their risk profile.

\begin{figure*}[t]
    \begin{subfigure}{0.44\textwidth}
        \centering
        \includegraphics[height=3cm]{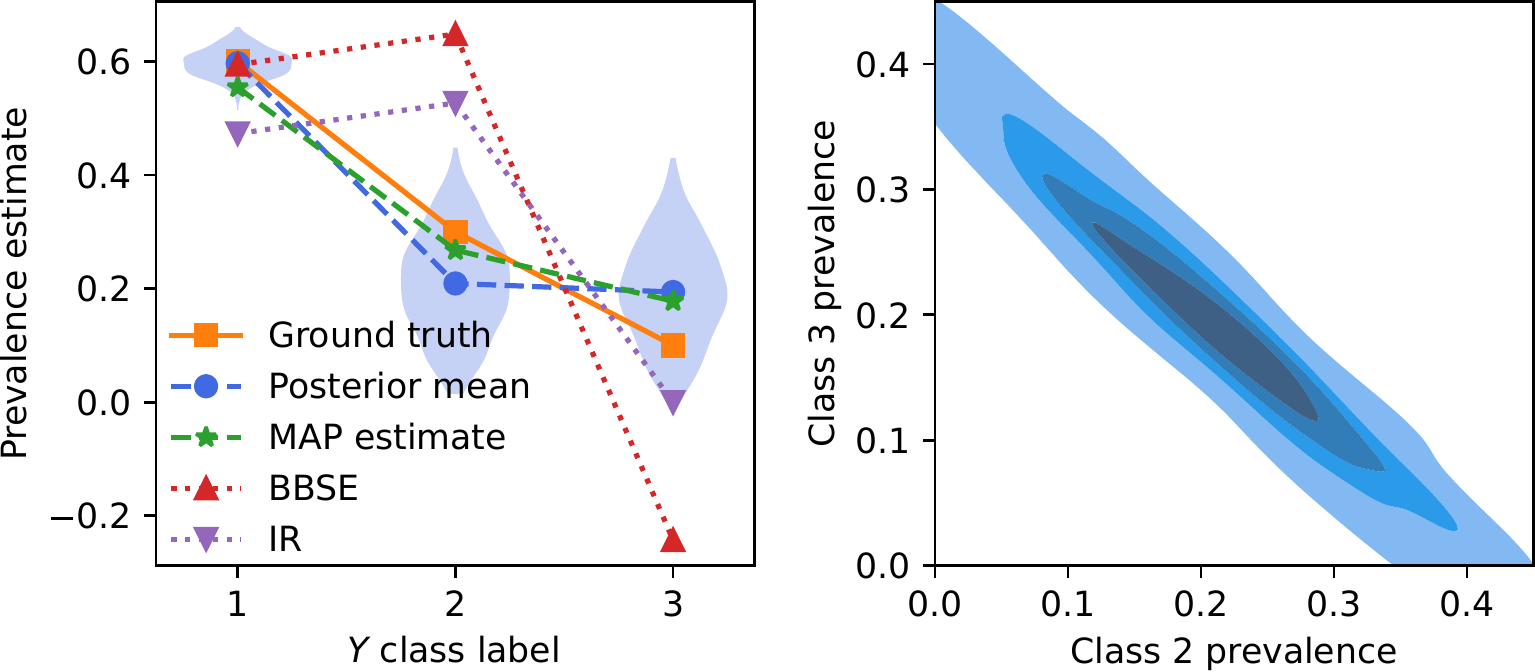}
        \caption{The posterior on $\pi'$ in nearly non-identifiable model.}
        \label{fig:nearly_non_identifiable}
    \end{subfigure}
    \begin{subfigure}{0.24\textwidth}
        \centering
        \includegraphics[height=3cm]{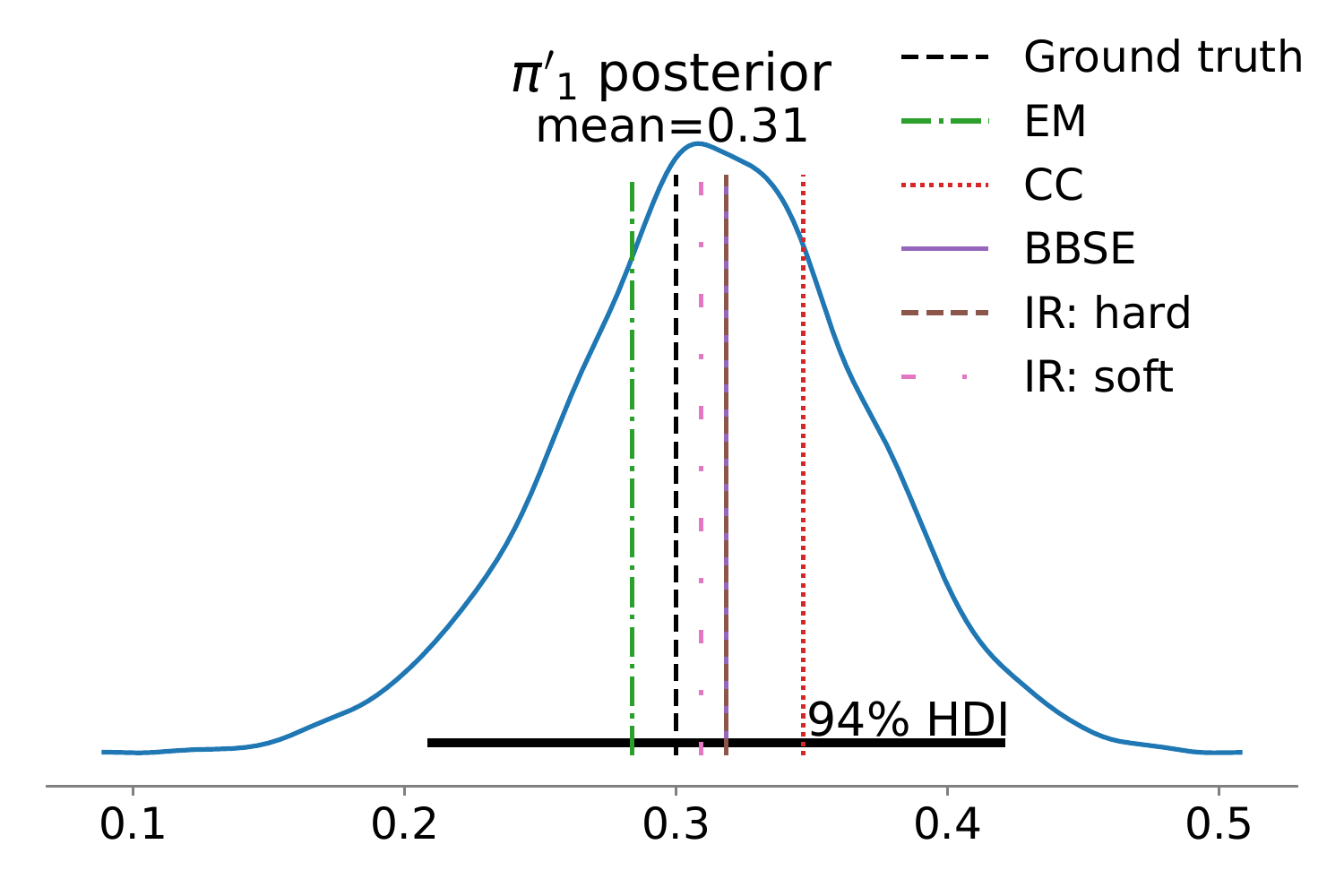}
        \caption{Breast Cancer Database.}
        \label{fig:cancer_experiment}
    \end{subfigure}
    \begin{subfigure}{0.24\textwidth}
        \centering
        \includegraphics[height=3cm]{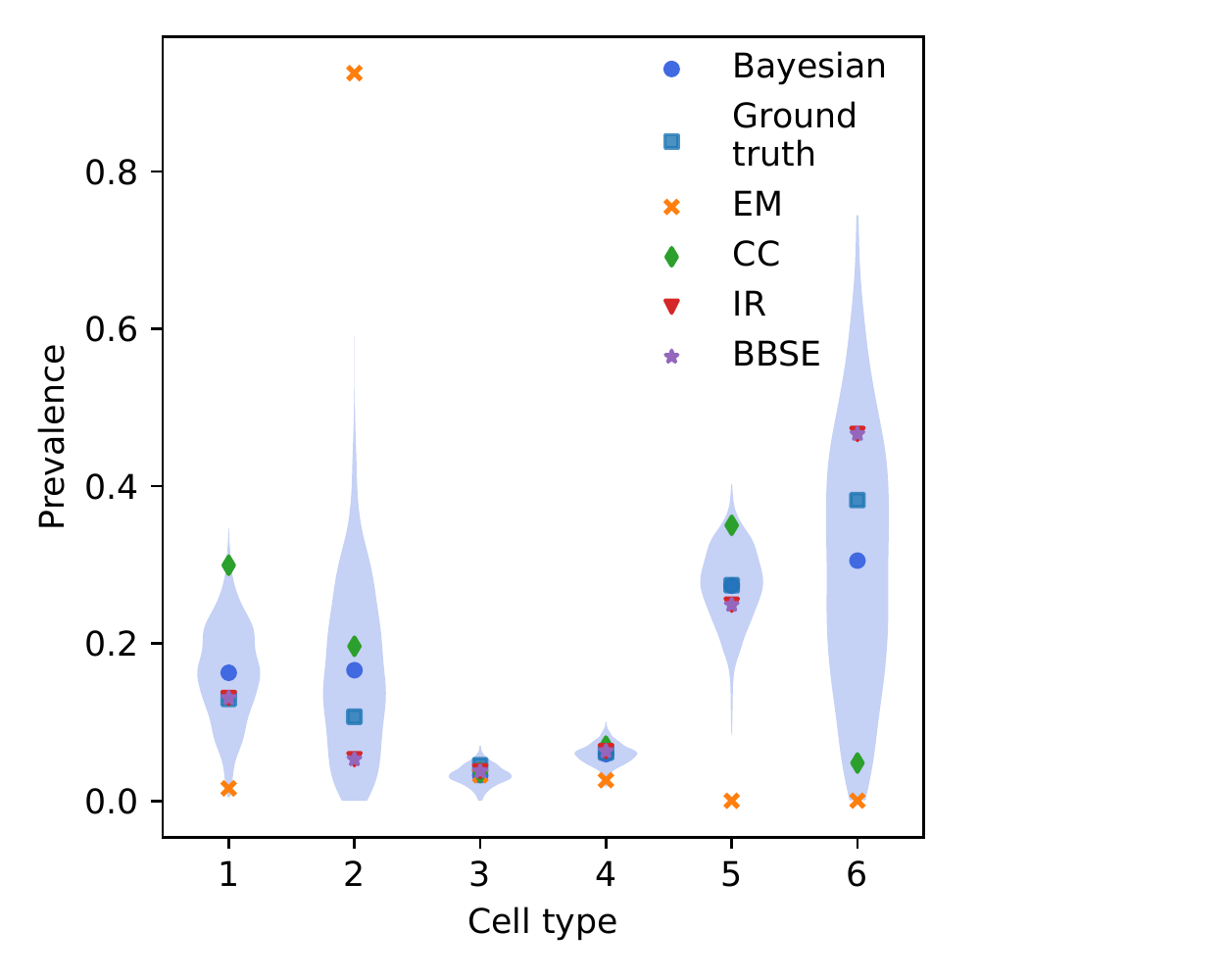}
        \caption{Single-cell data.}
        \label{fig:single-cell}
    \end{subfigure}

\caption{Bayesian posterior and point estimates in three scenarios.}
\label{fig:image2}
\end{figure*}

Then, we applied our method to single-cell RNA-seq data from Kidney Cell Atlas \citep{Stewart-kidney-cell-atlas}. 
We selected 6 types of immune cells and
selected one patient as the unlabeled data set, three patients as the validation data set and the rest of the patients as the training data set on which we trained a random forest classifier.
In Fig.~\ref{fig:single-cell} we present point estimates of different algorithms together with the posterior provided by the Bayesian approach. 
We think our method performs quite well, giving a point estimate together with uncertainty quantification.





\subsection{Uncertainty Assessment for a Gaussian Mixture}

As mentioned above, $\Mapprox$ is a tractable approximation to $\Mtrue$, but incurs a loss of information.
In this section we study the quality of this approximation in a particularly simple $\Mtrue$ model with $\mathcal X=\mathbb R$ and a mixture of two Gaussian variables. 

The generative mechanism $P_\theta(X\mid Y)$ is parameterised by means and standard deviations
\(
    \theta = (\mu_1, \mu_2, \sigma_1, \sigma_2).
\)
As $\Mtrue$ is in this case tractable, we can compare the posterior
\(
    P(\pi'\mid \{Y_i, X_i\}, \{X_j'\})
\)
in $\Mtrue$ with our suggested approximation
\(
    P(\pi'\mid \{Y_i, C_i\}, \{C_j'\})
\)
(in $\Mapprox$) for different mappings $f\colon \mathcal X \to \mathcal C$.
We partition the real line into $K$ intervals
\(
    (-\infty, a_1), [a_1, a_2), \dotsc, [a_{K-2}, a_{K-1}),  [a_{K-1}, \infty)
\)
and assign $f(x) = k$ if $x$ belongs to the $k$th interval\footnote{The ground-truth matrix $P(C\mid Y)$ corresponding to this process can be calculated analytically as
\(
P(C=k \mid Y=l) = P(X\in f^{-1}(k) \mid Y=l)
\)
and the last value is the difference of the CDF of the $l$th Gaussian distribution evaluated at the endpoints of the $k$th interval.}.

We choose $\mu_1 = 0$, $\mu_2=1$, $\sigma_1 = 0.3$, and $\sigma_2=0.4$ and sample 500 points with $Y=1$ and 500 with $Y=2$ for the labeled data set ($N=1000$).
The unlabeled data set comprises 200 instances of $Y=1$ and 800 of $Y=2$ ($N'=1000$). Then, for each $K\in \{3, 5, 7, 9\}$ we use $a_1 = -0.5$, $a_{K-1} = 1.5$ (as it captures most of the probability mass of $\Plab(X)$) and split the interval $[a_1, a_{K-1}]$ into $K-2$ evenly-spaced bins (see the left panel of Fig.~\ref{fig:gaussian_experiment}).

We fitted the Gaussian $P_\theta(X\mid Y)$ model to the data\footnote{Additional details regarding convergence and prior specification available in the Appendix TODOTODOTODO.} alongside the discrete approximations $P_\varphi(C\mid Y)$ for different $K$ using the NUTS sampler \cite{Hoffman-NUTS-sampler} and plotted the posteriors in the middle panel of Fig.~\ref{fig:gaussian_experiment} as well as the posterior means and 95\% highest density credible intervals (HDI).
Except for $K=3$, which has too wide a posterior losing too much information, all approximations and the full Gaussian model adequately capture the ground-truth ${\pi'_1}^*$ and their uncertainty estimates are in good agreement.

\begin{figure*}[t]
\begin{center}
\includegraphics[width=\linewidth]{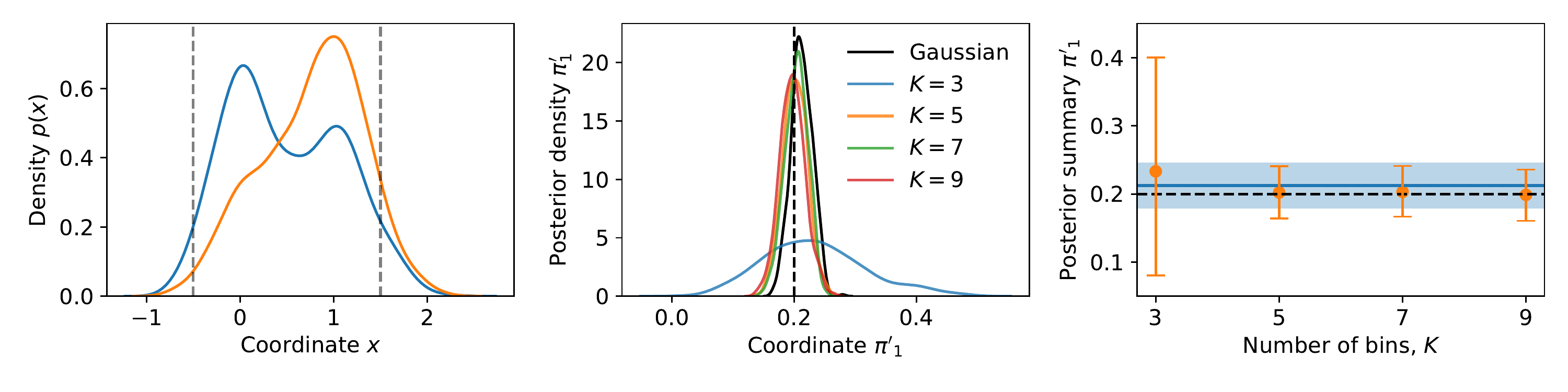}
\end{center}
\caption{
Gaussian mixture Experiment. Left: densities of $\Plab(X)$ (blue) and $\Punl(X)$ (yellow) together with lines marking $a_1$ and $a_{K-1}$. Middle: posterior density on $\pi'_1$ in different models. Dashed vertical line marks the exact $\Punl(Y=1)$. Right: dashed horizontal line marks the exact $\Punl(Y=1)$. The blue region marks the mean and the 95\% credible interval in the Gaussian mixture model. Yellow markers mark the means and the 95\% credible intervals for the discretized models.
}
\label{fig:gaussian_experiment}
\end{figure*}


\section{Connections to Prior Work}
Our method draws upon black-box shift estimators (BBSE), a quantification method proposed by \citet{Lipton2018}. This approach is closely related to invariant ratio estimators \citep{Vaz-Izbicki-Stern}, which generalize the classical adjusted classify and count algorithm (see, for example, \citet{gart-buck}, \citet[\textsection 2.3.1]{Saerens-2001-adjustingtheoutputs}, or \citet{forman}).
In particular, \cite{Lipton2018} derive theoretical guarantees including the error estimate. A variant of invariant ratio estimators, using a soft classifier, have been also proposed by \citet{Bella2010}. 

Prior to this work, \citet[\textsection 6]{Storkey2009} proposed a Bayesian approach to quantification and we believe that this approach is the most principled from Bayesian perspective if a model for $P(X\mid Y)$ is available and the inference is tractable.
As this is not the case in most machine learning problems due to limited data size, high-dimensional nature of the feature space, and complex generative models, we propose to replace $P(X\mid Y)$ with the low-dimensional approximation using an auxiliary classifier, as in the black-box shift estimators framework.
Additionally, we provide a condition under which our \emph{maximum a posteriori} estimate approximately converges to the true value in the large sample limit. We also note that using an additional classifier allows for a weaker standard prior probability shift assumption (invariance of $P(X\mid Y)$) to the invariance of $P(C\mid Y)$. 

There are several other existing quantification methods, not directly related to this work; expectation maximization \citep{peters-coberly, Saerens-2001-adjustingtheoutputs} being perhaps the most popular one. As \cite{Tasche2017} showed, expectation maximization converges to the true prevalence vector in the limit of infinite sample size. The main issue with this approach is the reliability on a calibrated classifier providing the access to the probability estimate $P(Y\mid X)$ --- as \citet{Guo2017} pointed out, modern neural networks often tend to be overconfident. A comparison between expectation maximization and black-box shift estimators can be found in \citet{garg2020}.
The CDE-Iterate algorithm of \citet{xue-weiss}, can obtain good empirical performance on selected problems \citep{karpov-etal-2016-nru}. However, as \citet[\textsection 3.4]{Tasche2017} showed, it is not asymptotically consistent.
Finally, \citet{Zhang2013} described a kernel mean matching approach, with a provable theoretical guarantee. As \citet[\textsection 6]{Lipton2018} observed, this approach is challenging to scale to large data sets.

\section{Discussion}
The presented approach generalizes point estimates provided by black-box shift estimators and invariant ratio estimators to the Bayesian inference setting.
This allows one to \emph{quantify uncertainty} and \emph{use existing knowledge} about the problem by prior specification.
Moreover, by the construction of the sufficient statistic our approach is tractable even in large-data limit 
(for either data set considered).
In all our experiments, the suggested estimator obtained at least as good performance as the existing methods, outperforming them in the $K < L$ case where the number of modelled classes differs from the ``true'' number of classes. Compared to point estimates with asymptotic guarantees, our approach ``knows what it does not know'', meaning that the posterior is meaningful even if the matrix $P(C\mid Y)$ is not \mbox{(left-)invertible}, and it is specific for the prevalence values of those classes for which the feature extractor $f$ is sufficiently informative.

More generally, we wish to stress the importance of a principal shift in perspective. Rather than training one's own classifier and then modifying that training to account for data shift, we regard $f$ as an auxiliary ``feature extraction'' method, which can be trained or tuned on an auxiliary data set in the context of an arbitrary type distribution shift. Crucial is only the access to the labeled data set which was generated according to the same process $P(C\mid Y)$. 
This is particularly useful when a hard, fully black-box classifier is given without the possibility of retraining it, which is an increasingly common theme with modern AI applications, which are often huge assets doing sophisticated processing, and also often proprietary and only available through APIs. 

However, the method we introduce is not free from challenges.
As in all Bayesian inferences, care is required regarding modelling assumptions: whether the discrete model is applicable and what prior should be used.
In particular, the prior probability shift assumption may not hold\footnote{%
  Tests for label shift are described in \citet{Lipton2018} and \cite{Vaz-Izbicki-Stern}.
}
(e.g., if the labeled and unlabeled data sets were collected under radically different conditions or the labeled and unlabeled data sets have different classes $\mathcal Y$).
Additionally, Bayesian inference often carries a model choice problem, and different choices for $K$ or the discretization method $f$ may yield different posteriors on the prevalence vector $\pi'$, especially in the low data regime.
As we remarked, if the model $P_\theta(X\mid Y)$ is tractable, we suggest to use this instead of an approximation $P_\varphi(C\mid Y)$. If it is not tractable, we suggest to use the available classifier with $K$ classes, observing the quality of $P_\varphi(C\mid Y)$ matrix, and perhaps training one's own classifier on some hold-out data set.


\subsection{Societal Impact}
\label{subsection:societal-impact}
This article discusses a Bayesian method of quantifying the prevalence of different classes in an unlabeled data set.
We note that in general the parameter posterior conditioned on the full data view $X$ can be different from the posterior conditioned on some representation $C=f(X)$ --- in cases where a reliable model $P(X\mid Y)$ is available and the inference is tractable, we suggest to use this instead of our discretized method.
Secondly, the model need not apply --- perhaps label shift is not the only distribution shift occurring in the problem or the data may not be exchangeable. In epidemiology, for example, outbreaks induce correlations between the healthiness of different people that can easily extend to sampling.
Finally, even if all the assumptions hold, recalibrating a probabilistic classifier with quantification may have undesirable consequences regarding fairness.

\subsubsection*{Code Availability and Reproducibility}

The accompanying Python implementation (including the code to reproduce the experiments) is available at
\url{https://github.com/pawel-czyz/labelshift}.


\begin{acknowledgements} 
We would like to thank Ian Wright for valuable comments on the manuscript. This publication was supported by GitHub, Inc. and ETH AI Center. We would like to thank both institutions.
\end{acknowledgements}

\newpage
\bibliography{references}
\end{document}


\onecolumn 
\maketitle

\appendix

\section{Derivation of the Sufficient Statistic}
\label{appendix:sufficient-statistic-derivation}

Starting from the joint probability
\[
    P(\pi, \pi', \varphi, \{Y_i, C_i\}, \{Y_j', C_j'\}) = P(\pi, \pi', \varphi) \times \prod_{i=1}^{N} P(C_i \mid \varphi, Y_i ) P(Y_i \mid \pi) \times \prod_{j=1}^{N'} P(C'_j \mid \varphi, Y'_j ) P(Y'_j \mid \pi'),
\]
we need to derive
\[
  P(\pi, \pi', \varphi \mid \{Y_i, C_i\}, \{C_j'\}) \propto  P(\{Y_i, C_i\}, \{C_j'\} \mid \pi, \pi', \varphi)  P(\pi, \pi', \varphi),
\]
The observed likelihood is given by marginalization of $Y'_j$ variables:
\begin{align*}
P(\{Y_i, C_i\}, \{C_j'\} \mid \pi, \pi', \varphi) &= \sum_{l_{N'} \in \mathcal Y } \cdots \sum_{l_1 \in \mathcal Y } \prod_{i=1}^{N} P(C_i \mid \varphi, Y_i ) P(Y_i \mid \pi) \prod_{j=1}^{N'}  P(C'_j \mid \varphi, Y'_j=l_j ) P(Y'_j = l_j \mid \pi)\\
&= \underbrace{\prod_{i=1}^{N} P(C_i \mid \varphi, Y_i ) P(Y_i \mid \pi)}_{A}  \times  \underbrace{\bigg(  \sum_{l_{N'}\in \mathcal Y } \cdots \sum_{l_1 \in \mathcal Y }  \prod_{j=1}^{N'} P(C'_j \mid \varphi, Y'_j = l_j ) P(Y'_j=l_j \mid \pi') \bigg)}_{B}.
\end{align*}

Each of these terms will be calculated separately.

We want to calculate
\[
    A := \prod_{i=1}^{N} P(C_i = c_i \mid \varphi, Y_i = y_i ) P(Y_i = y_i \mid \pi) = \underbrace{  \prod_{i=1}^N P(C_i=c_i \mid \varphi, Y_i = y_i) }_{A_1} \times  \underbrace{  \prod_{i=1}^N P(Y_i = y_i \mid \pi ) }_{A_2}.
\]

The term $A_2$ is simple to calculate: as $P(Y_i=y_i\mid \pi) = \pi_{y_i}$, we have
\[
   A_2 = \prod_{i=1}^N \pi_{y_i} = \prod_{l=1}^L (\pi_l)^{n_l},
\]
where $n_l$ is the number of $i\in \{1, \dotsc, N\}$, such that $y_i = l$. 
In particular, up to a factor $N! / n_1! \dots n_L!$, this is the PMF of the multinomial distribution parametrised by $\pi$ evaluated at $(n_1, \dotsc, n_L)$.

To calculate $A_1$ we need to observe that $P(C_i = k \mid \varphi, Y_i = l ) = \varphi_{lk}$. Hence,
\[
    A_1 = \prod_{i=1}^N P(C_i=c_i \mid \varphi, Y_i = y_i) = \prod_{l=1}^L \prod_{k=1}^K (\varphi_{lk})^{f_{lk}},
\]
where $f_{lk}$ is the number of $i \in \{1, \dotsc, N\}$, such that $y_i = l$ and $c_i = k$. Observe that $n_l = f_{l1} + \cdots + f_{lK}$.

In particular, up to the factor 
\[
    \prod_{l=1}^L \frac{ n_l!}{ f_{l1}! \dots f_{lK}! }
\]
this corresponds to the product of PMFs of $L$ multinomial distributions parametrised by probabilities $\varphi_{l:}$ evaluated at $f_{l:}$. 

Recall that
\[
    B :=  \sum_{l_{N'} \in \mathcal Y } \cdots \sum_{l_1 \in \mathcal Y }  \prod_{j=1}^{N'} P(C'_j = c'_j \mid \varphi, Y'_j = l_j) P(Y'_j=l_j \mid \pi').
\]

We can use the sum-product identity
\[
    \sum_{l_{N'} \in \mathcal Y } \cdots \sum_{l_1 \in \mathcal Y } \prod_{j=1}^{N'} f_j( l_j ) = \prod_{j=1}^{N'} \sum_{l \in \mathcal Y} f_j(l) 
\]
to reduce:
\[
    B = \prod_{j=1}^{N'} \sum_{l \in \mathcal Y}  P(C'_j = c'_j \mid \varphi, Y'_j = l) P(Y'_j=l \mid \pi').
\]
Because both $C'_j$ and $Y'_j$ are parametrised with categorical distributions, we have
\[
    P(C'_j = k\mid \varphi, Y'_j = l) = \varphi_{lk}
\]
and
\[
    P(Y'_j = l \mid \pi') = \pi'_l,
\]
so
\[
    \sum_{l\in \mathcal Y} P(C'_j = k \mid \varphi, Y'_j = l) P(Y'_j=l \mid \pi') = (\varphi^T \pi')_k.
\]

Hence,
\[
    B = \prod_{j=1}^{N'} (\varphi^T \pi')_{c'_j} = \prod_{k=1}^K \big( (\varphi^T \pi')_k \big)^{n'_k}, 
\]
where $n'_k$ is the number of $j \in \{1, \dotsc, N'\} $ such that $c'_j = k$.

In particular, up to a factor of $N'! / n'_1! \cdots n'_K!$, this is the PMF of the multinomial distribution parametrized by probabilities $\varphi^T\pi'$ evaluated at $(n'_1, \dotsc, n'_K)$.

\section{Proof of Asymptotic Identifiability}
\label{appendix:proof_theorem_asymptotic}

We first need to establish two simple lemmas regarding approximate left inverses:

\newcommand{\Reals}{\mathbb{R}}

\begin{lemma}
    Choose any norms on the space of linear maps $\Reals^L \to \Reals^K$ and $\Reals^K \to \Reals^L$. Suppose $K\ge L$ and that $A_0\colon \mathbb \Reals^L \to \Reals^K$ is of full rank $L$.
    Then, for every $\varepsilon > 0$ there exists $\delta > 0$ such that if
    $A\colon \Reals^L\to \Reals^K$ is any matrix such that
    \[
        || A - A_0 || < \delta,
    \]
    then the left inverse $A^{-1} := (A^TA)^{-1}A^T$ exists and
    \[
        || A^{-1} - A^{-1}_0 || < \varepsilon.
    \]
\end{lemma}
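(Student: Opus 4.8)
The plan is to recognize that $A \mapsto A^{-1} = (A^TA)^{-1}A^T$ is, wherever it is defined, a continuous function of the entries of $A$, and then to read off the claimed $\varepsilon$--$\delta$ statement as nothing more than continuity of this map at the point $A_0$. Since $\Reals^L \to \Reals^K$ and $\Reals^K \to \Reals^L$ are finite-dimensional, all norms on them are equivalent, so continuity with respect to any fixed norm coincides with continuity with respect to the given ones; I may therefore work with whichever norm is most convenient and the conclusion transfers automatically. This observation also disposes of the ``choose any norms'' hypothesis at the outset.

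First I would establish that the left inverse even exists in a neighborhood of $A_0$. Because $A_0$ has full rank $L$ and $K \ge L$, the symmetric $L \times L$ matrix $A_0^T A_0$ is positive definite, hence invertible, so $\det(A_0^T A_0) \ne 0$. The map $A \mapsto \det(A^T A)$ is a polynomial in the entries of $A$ and therefore continuous, so there is a radius $\delta_0 > 0$ such that $\| A - A_0 \| < \delta_0$ forces $\det(A^T A) \ne 0$. For every such $A$ the matrix $A^T A$ is invertible and the left inverse $A^{-1} = (A^T A)^{-1} A^T$ is well-defined.

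Next I would invoke continuity of matrix inversion: by Cramer's rule the entries of $M^{-1}$ are rational functions of the entries of $M$ with common denominator $\det M$, so $M \mapsto M^{-1}$ is continuous on the open set of invertible matrices. Composing, $A \mapsto (A^T A)^{-1}$ is continuous on the ball $\| A - A_0 \| < \delta_0$, and multiplying by the (continuous, indeed linear) map $A \mapsto A^T$ shows that $A \mapsto (A^T A)^{-1} A^T$ is continuous there. Continuity at $A_0$ is precisely the statement that for every $\varepsilon > 0$ there exists $\delta \in (0, \delta_0]$ with $\| A - A_0 \| < \delta \Rightarrow \| A^{-1} - A_0^{-1} \| < \varepsilon$, which is the claim.

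I do not expect a serious obstacle here; the only point requiring a little care is the first step, namely guaranteeing that the left inverse exists near $A_0$ rather than merely that the formula varies continuously wherever it happens to be defined. Isolating the nonvanishing of $\det(A^T A)$ handles this cleanly, after which everything reduces to the standard continuity of polynomial and rational matrix operations.
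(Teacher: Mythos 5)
Your proof is correct and follows essentially the same route as the paper's: establish that the left inverse exists near $A_0$ (you via nonvanishing of $\det(A^TA)$, the paper via lower semi-continuity of rank, which amounts to the same thing) and then appeal to continuity of the formula $(A^TA)^{-1}A^T$, with norm equivalence disposing of the choice of norms. Your version is somewhat more explicit about why the inversion step is continuous (Cramer's rule), but the argument is the same.
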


\begin{proof}
    First note that indeed the choice of norms does not matter, as all norms on finite-dimensional vector spaces are equivalent.
    
    Then, observe that rank is a lower semi-continuous function, so that for sufficiently small $\delta$ the map $A$ will be of rank $L$ as well.
    
    Finally, it is clear that the chosen formula for the left inverse is continuous as a function of $A$.
\end{proof}

\begin{lemma}
    If $K\ge L$ and matrix $A_0\colon \Reals^L\to \Reals^K$ is of full rank $L$, then for every $\varepsilon > 0$ there exist numbers $\delta > 0$ and $\nu > 0$ such that for every linear mapping $A\colon \Reals^L\to \Reals^K$ and vector $v\in \Reals^L$ if
    \[
        ||A - A_0|| < \delta
    \]
    and
    \[
        ||Av-A_0v_0|| < \nu,
    \]
    then
    \[
        ||v - v_0|| < \varepsilon.
    \]
\end{lemma}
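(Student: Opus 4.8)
The plan is to reduce the statement to the previous lemma by expressing both $v$ and $v_0$ through left inverses and then estimating their difference by the triangle inequality. As in the previous lemma, by equivalence of norms on finite-dimensional spaces I may assume the norms on the spaces of linear maps are the operator norms induced by the chosen vector norms, so that submultiplicativity is available. Since $A_0$ has full rank $L$, its left inverse $A_0^{-1} = (A_0^TA_0)^{-1}A_0^T$ exists, and by the previous lemma I may pick $\delta > 0$ small enough that whenever $\norm{A - A_0} < \delta$ the left inverse $A^{-1}$ also exists and satisfies $\norm{A^{-1} - A_0^{-1}} < \varepsilon_1$, for a tolerance $\varepsilon_1$ to be fixed below; in particular the reverse triangle inequality then bounds $\norm{A^{-1}} \le \norm{A_0^{-1}} + \varepsilon_1$.

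Because $A^{-1}$ and $A_0^{-1}$ are genuine left inverses, I have the exact identities $v = A^{-1}(Av)$ and $v_0 = A_0^{-1}(A_0 v_0)$. Inserting the intermediate term $A^{-1}(A_0 v_0)$ and applying the triangle inequality together with submultiplicativity yields
\[
    \norm{v - v_0} \le \norm{A^{-1}}\,\norm{Av - A_0 v_0} + \norm{A^{-1} - A_0^{-1}}\,\norm{A_0 v_0}.
\]
The first summand is controlled by $\nu$ times the bounded factor $\norm{A^{-1}}$, and the second by $\delta$ (through the previous lemma) times the fixed constant $\norm{A_0 v_0}$.

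It then suffices to split the target $\varepsilon$ into two halves. First I fix $\varepsilon_1 > 0$ so small that $\varepsilon_1\,\norm{A_0 v_0} < \varepsilon/2$; note that if $A_0 v_0 = 0$ then injectivity of $A_0$ forces $v_0 = 0$ and this term vanishes identically, so any $\varepsilon_1$ works. The previous lemma supplies the matching $\delta$. Finally I take $\nu < (\varepsilon/2)\big/\big(\norm{A_0^{-1}} + \varepsilon_1\big)$, which forces the first summand below $\varepsilon/2$. Adding the two bounds gives $\norm{v - v_0} < \varepsilon$, as required.

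The only point needing care — and the main, though mild, obstacle — is that the error has two independent sources: the perturbation of the operator $A$ away from $A_0$, which couples to the fixed vector $v_0$, and the slack $\nu$ in the approximate equation $Av \approx A_0 v_0$. The decomposition above separates them cleanly, but the choices must be made in the right order: I must fix $\varepsilon_1$ (and hence $\delta$) \emph{before} choosing $\nu$, since the admissible size of $\nu$ depends on the bound $\norm{A_0^{-1}} + \varepsilon_1$ on $\norm{A^{-1}}$ obtained from the previous lemma.
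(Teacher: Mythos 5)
Your proof is correct and follows essentially the same route as the paper's: invoke the previous lemma for continuity of the left inverse, then add and subtract an intermediate term and apply the triangle inequality with submultiplicativity of the induced norms. The only difference is the direction of the splitting --- you insert $A^{-1}(A_0v_0)$ so that the operator error multiplies the fixed vector $A_0v_0$, whereas the paper inserts $A_0^{-1}(Av)$ so that the operator error multiplies $Av$ (which implicitly requires the additional observation that $\norm{Av}\le\norm{A_0v_0}+\nu$ is bounded); your variant handles that bookkeeping slightly more cleanly.
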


\begin{proof}
    Again, the norm on either space can be chosen arbitrarily without any loss of generality. We will choose the $p$-norm for vectors and the induced matrix norms. 
    
    From the previous lemma we know that for any chosen $\beta > 0$ we can take $\delta > 0$ such that $A$ is left-invertible and
    \[
        || B - B_0 || < \beta,
    \]
    where $B=A^{-1}$ and $B_0 = A_0^{-1}$ are the left inverses in the form defined before. 
    
    Write $w = Av$ and $w_0 = A_0v_0$. We have
    \begin{align*}
        ||v-v_0|| &= || Bw - B_0w_0 || \\
        &= || (Bw - B_0 w) + (B_0w - B_0w_0) || \\
        &= || (B-B_0)w + B_0(w - w_0) || \\
        &\le || (B-B_0)w || + || B_0(w - w_0) || \\
        &\le ||B-B_0||\cdot  ||w|| + ||B_0|| \cdot ||w-w_0|| \\
        &\le \beta ||w|| + ||B_0|| \nu.
    \end{align*}
    
    We can bound each of these two terms by $\varepsilon/3$ choosing appropriate $\beta$ and $\nu$. Then, we can find $\delta$ yielding appropriate $\beta$.
\end{proof}


Now the proof will proceed in two steps:
\begin{enumerate}
    \item We show than for any prescribed probability we can find $N$ and $N'$ large enough that the maximum likelihood solution will be close to the true parameter values. 
    
    \item Then, we show that for reasonable priors the maximum a posteriori solution will almost surely assymptotically converge to the maximum likelihood solution. 
\end{enumerate}

\newcommand{\trueprevalencetrain}{{\pi^*}}
\newcommand{\trueprevalencetest}{{\pi'^*}}
\newcommand{\trueconfusion}{{\varphi^*}}

Let's assume that the data was sampled from the model with true parameters $\trueprevalencetrain$, $\trueprevalencetest$, $\trueconfusion$ and take $\delta > 0$ and $\varepsilon > 0$. 

\newcommand{\prevalencetrainestimate}{\hat\pi}
\newcommand{\prevalencetestestimate}{\hat \pi'}
\newcommand{\confusionestimate}{\hat\varphi}

For any $\nu > 0$ we can use the fact that log-likelihood is given by   
\[
    \ell(\pi, \pi', \varphi) =  \sum_{l\in \mathcal Y} N_l \log \pi_l + \sum_{k\in \mathcal C} \sum_{l\in \mathcal Y} F_{lk} \log \varphi_{lk} + \sum_{k\in \mathcal C} N'_k \log (\varphi^T\pi')_k,
\]
and  by the strong law of large numbers we can find $N$ and $N'$ large enough that with probability at least $1-\delta$ we will have
$|| \prevalencetrainestimate - \trueprevalencetrain || < \nu$ and $|| \confusionestimate - \trueconfusion || < \nu$, and  $|| \confusionestimate^T \prevalencetestestimate - \trueconfusion^T \trueprevalencetest || < \nu$,
where $\prevalencetrainestimate$, $\confusionestimate$, and $\prevalencetestestimate$ is the maximum likelihood estimate.

Basing on the previously established lemmas we conclude that we can pick $\nu$ small enough that 
$|| \prevalencetrainestimate - \trueprevalencetrain || < \varepsilon$, $|| \confusionestimate - \trueconfusion || < \varepsilon$, and  $||\prevalencetestestimate - \trueprevalencetest || < \varepsilon$.

Now note that if we assume the PDF of the prior $P(\pi, \pi', \varphi)$ to be continuous, we can take a compact neighborhood of $(\trueprevalencetrain, \trueprevalencetest, \trueconfusion)$ inside $\Delta^{L-1} \times \Delta^{L-1} \times \Delta^{K-1} \times \cdots \times \Delta^{K-1}$ with probability mass arbitrarily close to $1$. Then, the log-prior defined on this set will be bounded and the \emph{maximum a posteriori} estimate can be made arbitrarily close to the maximum likelihood estimate with any desired probability.

\section{Additional Experiments}
\label{appendix:additional-experiments}

\subsection{Discrete Categorical Model}
In Fig.~\ref{fig:categorical-l1} and Fig.~\ref{fig:categorical-l2} we present the comparison between different point estimators using different loss functions (mean absolute error and mean squared error).

\begin{figure*}
    \centering
    \includegraphics[width=\textwidth]{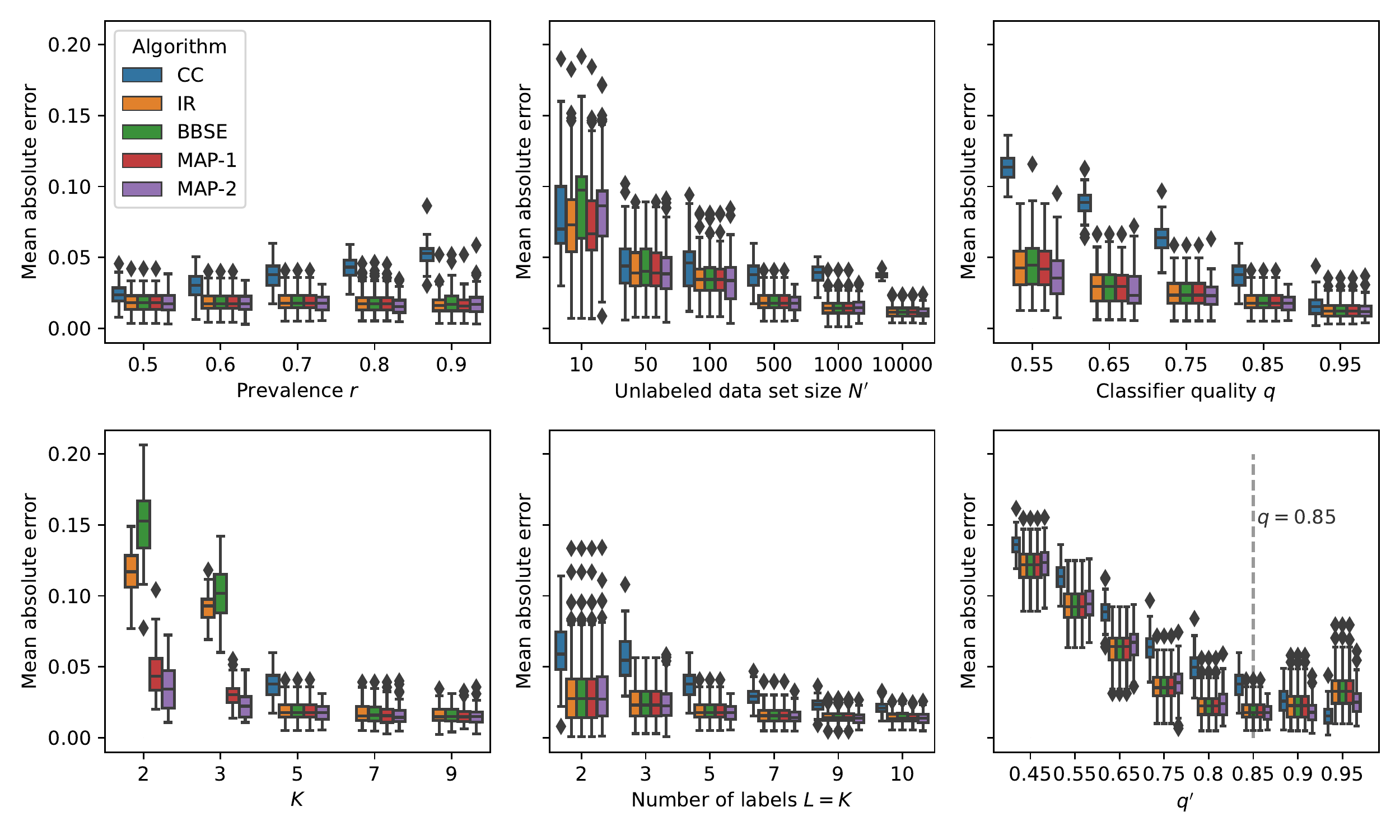}
    \caption{The results of the experiments with simulated categorical black-box classifier using mean absolute (mean $\ell_1$) error.}
    \label{fig:categorical-l1}
\end{figure*}

\begin{figure*}
    \centering
    \includegraphics[width=\textwidth]{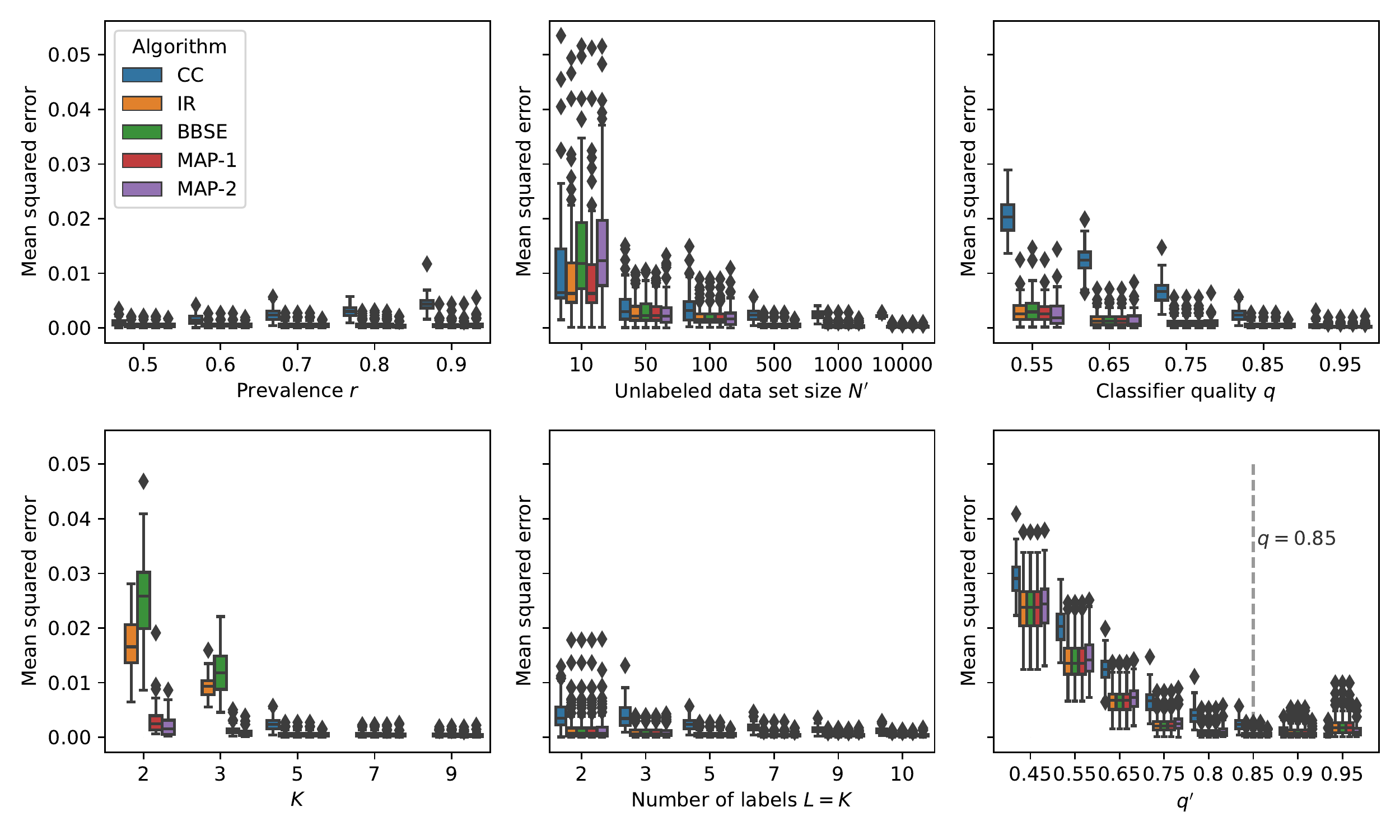}
    \caption{The results of the experiments with simulated categorical black-box classifier using mean squared (mean $\ell_2$) error.}
    \label{fig:categorical-l2}
\end{figure*}

\subsection{Gaussian Mixture Model}
For the Gaussian model we specified the priors on means using the normal distribution $\mu_1, \mu_2\sim \mathcal N(0, 1)$ and on the standard deviations using the half-normal distribution $\sigma_1, \sigma_2\sim |\mathcal N|(0.5^2)$.
We used the same prior for $\pi'$ as for the discrete models: $\pi' \sim \mathrm{Dirichlet}(\alpha)$, with $\alpha=(2, 2)$.
From each model we sampled three chains with  $10\,000$ samples using the NUTS sampler \citep{Hoffman-NUTS-sampler} and PyMC \citep{pymc} discarding the first ``warm up'' $5\,000$ samples. The Gelman--Rubin statistic and manual investigation of the trace plots did not unravel convergence problems.
To plot the posteriors we used a KDE plot on one of the thinned (by a factor of 10) chains.

\section{Discretizing Continuous-Valued Classifiers}
\label{appendix:discretizing-continuous-valued-classifier}

Consider first a \emph{soft} classifier $\tilde f\colon \mathcal X\to \Delta^{L-1}$ which outputs a \emph{confidence vector} $\tilde f(x) \in \Delta^{L-1}$.
Note that the confidence vector often does not need to represent the true conditional probability \citep{Guo2017}.

To obtain a \emph{discrete} classifier $f$, we will generalize the usual operation
\[
    f(x) = \argmax_{l = 1, \dotsc, L}~ [\tilde f(x)]_l.
\]
using partitions. Define the random variable $\tilde C = \tilde f(X)$, where $X$ is the random variable corresponding to the available covariates.
\begin{definition}
    \label{definition:partition}
    Call a family of sets $A_1, \dotsc, A_K$, where $A_k\subseteq \Delta^{L-1}$, a \textbf{partition for $\tilde C$} if:
    \begin{itemize}
        \item $P(\tilde C\in A_i \cap A_j) = 0$ for $i\neq j$ and
        \item $P(\tilde C \in A_1 \cup \dots \cup A_K) = 1$.
    \end{itemize}
    for both $\Plab$ and $\Punl$.
\end{definition}

Given a partition we can convert a soft classifier $\tilde f$ which obtains the values in the $(L-1)$-simplex into a hard classifier $f$ which obtains the values in the discrete set $\mathcal C = \{1, \dotsc, K\}$. Changing $K$ we can retain more or less information on the problem.

For a binary classifier $L=2$ the simplex $\Delta^1$ can be parametrized by the interval $(0, 1)$. It is natural to partition it into $K=2$ sets basing on a value $\alpha \in (0, 1)$, what is related to Platt scaling \citep{platt-scaling}.

For a general $L$ the partitions can be defined e.g., by applying clustering algorithms to the predictions $\tilde f (x)$. Heuristically, we expect this technique can be used to create better $P(C\mid Y)$, but we leave testing this technique to future  work. We also note that discretization should be applied with care, as it incurs information loss.


\section{Quantification Estimators}
\label{appendix:quantification-estimators}

\subsection{Classify and Count}
When $\mathcal C = \mathcal Y$ and $f\colon \mathcal X \to \mathcal Y$ is a classifier trained for a given problem with good accuracy, the simplest approach is to count its predictions and normalize by the total number of examples in the unlabeled data set.

\subsection{Adjusted Classify and Count}
Consider a case of an imperfect binary classifier, with $\mathcal Y = \mathcal C = \{+, -\}$. The true and false positive rates are defined by
\newcommand{\TPR}{\mathrm{TPR}}
\newcommand{\FPR}{\mathrm{FPR}}
\begin{align*}
    \TPR = P(C=+\mid Y=+)\\
    \FPR = P(C=+\mid Y=-)
\end{align*}

and can be estimated using the labeled data set.

If $\theta = \Punl(Y=+)$, we have
\[
    \Punl(C=+) = \TPR \cdot \theta + \FPR \cdot (1-\theta)
\]
which can be estimated by applying the classifier to the unlabeled data set and counting positive outputs.

If we assume that $\TPR \neq \FPR$, i.e., the classifier has any predictive power, we obtain
\[
    \theta = \frac{ \Punl(C=+) - \FPR }{\TPR - \FPR}.
\]

Then, $\Punl(C=+)$ is estimated by counting the predictions of the classifier on the unlabeled data set. As \citet{Tasche2017} showed, it is consistent in the limit of infinite data.

Two generalizations, extending it to the problems with more classes, are known as the invariant ratio estimator and black-box shift estimator.

\subsection{Invariant Ratio Estimator}

\citet{Vaz-Izbicki-Stern} introduce the invariant ratio estimator, generalizing the Adjusted Classify and Count approach as well as the ``soft'' version of it proposed by \citet{Bella2010}.

Consider any function $g\colon \mathcal X\to \mathbb R^{L-1}$. For example, if $f\colon \mathcal X\to \mathcal Y$ is a ``hard`` classifier, we may define $g$ as the ``one-hot encoding'' of $L-1$ labels and assign the zero vector to the last label:
\[
    g(x) = \begin{cases}
        (1, 0, \dotsc, 0) &\text{ for } f(x)=1,\\
        (0, 1, \dotsc, 0) &\text{ for } f(x)=2,\\
        \qquad\vdots \\
        (0, 0, \dotsc, 1) &\text{ for } f(x)=L-1,\\
        (0, 0, \dotsc, 0) &\text{ for } f(x)=L.
    \end{cases}
\]

Analogously, for the ``soft`` classifier $f\colon \mathcal X\to \Delta^{L-1} \subset \mathbb R^L$, $g$ may be defined as $g_k(x) = f_k(x)$ for $k \in \{1, \dotsc, L-1\}$.

Then the \emph{unrestricted} estimator $\hat \pi' \in \mathbb R^L$ is given by solving the linear system
\begin{align*}
\begin{cases}
    \hat g_1 &= \hat G_{11} \pi'_1 + \dots + \hat G_{1L} \pi'_L \\
    & \vdots\\
    \hat g_{L-1} &= \hat G_{L-1,1} \pi'_1 + \dots + \hat G_{L-1,L} \pi'_L \\
    1 &= \pi'_1 + \dots + \pi'_L
\end{cases}
\end{align*}
where 
\[
    \hat g_k = \frac{1}{N'}\sum_{j=1}^{N'} g_k(x'_j)
\]
and
\[
    \hat G_{kl} = \frac{1}{|S_l|} \sum_{ x \in S_l} g_k(x),
\]
where $S_l$ is the subset of the labeled data set $\mathcal D$ such that $y_i = l$.

Note that adjusted classify and count is a special case of the invariant ratio estimator, for a ``hard'' classifier. Similarly, the algorithm proposed by \citet{Bella2010} is a special case of invariant ratio estimator for a ``soft'' classifier.

The generalization for $K\neq L$ is immediate, with $\hat G$ becoming a $(K-1)\times L$ matrix and $\hat g$ becoming a vector of dimension $K-1$.

Finally, \citet{Vaz-Izbicki-Stern} introduce a restricted estimator $\hat \pi'_R \in \Delta^{L-1}$, which is given by a projection of $\hat \pi'_U$ onto the probability simplex. In our implementation we use the projection via sorting algorithm \citep{projections-Shalev-Shwartz-2006, projections-Blondel-2014}.

\subsection{Black-Box Shift Estimator}

Black-Box shift estimators are also based on the observation that
\[
    \Punl(C) = P(C\mid Y) \Punl(Y),
\]
where $P(C\mid Y)$ matrix can be estimated using either labeled or the unlabeled data set.
Instead of solving this matrix equation directly by finding the (left) inverse, \citet{Lipton2018} estimate the pointwise ratio $R(Y)=\Punl(Y) / \Plab(Y)$ by rewriting this equation as
\[
    \Punl(C) = \Plab(C, Y) R(Y),
\]
and estimate the joint probability matrix $\Plab(C, Y)$ using the labeled data set.
Then, the equation can be solved for $R(Y)$. By pointwise multiplication by $\Plab(Y)$ (estimated using the labeled data set) the prevalence vector $\Punl(Y)$ is found.

Note that this approach naturally generalizes to the $K\neq L$ case.

\subsection{Expectation--Maximization}

The expectation--maximization (EM) algorithm assumes access to a well-calibrated probabilistic classifier, representing $\Plab(Y\mid X)$ distribution and is based on two observations:
\begin{enumerate}
    \item  If we had access to $\Punl(Y=l\mid X=x)$, we could estimate the $\Punl(Y)$ vector:
    \begin{align*}
        \Punl(Y=l) 
        &= \mathbb{E}_{x\sim \Punl(X)} \big[  \Punl(Y=l\mid X=x) \big]\\
        &\approx \frac{1}{N'}\sum_{j=1}^{N'} \Punl(Y=l\mid X=x'_j).
    \end{align*}
    \item If we knew $\Punl(Y)$, we could recalibrate $\Plab(Y\mid X)$ to have $\Punl(Y\mid X)$:
    \[
        \Punl(Y=l \mid X=x) \propto \Plab(Y=l\mid X=x) \Punl(Y=l)/\Plab(Y=l).
    \]
\end{enumerate}

The EM algorithm starts with proposing an arbitrary probability distribution $\Punl(Y)$ and iterates between these two steps to the convergence. Note that each step of the algorithm requires $O(N')$ operations.

%

%
%

\bibliography{references}